\theoremstyle{plain}
\newtheorem{theorem}{Theorem}
\newtheorem{definition}{Definition}
\newtheorem{lemma}{Lemma}
\newtheorem{example}{Example}
\begin{document}
\title{Sequential Logistic Principal Component Analysis (SLPCA): Dimensional Reduction in Streaming Multivariate Binary-State System}

\author{\IEEEauthorblockN{Zhaoyi Kang, Costas J. Spanos}
\IEEEauthorblockN{Dept. of Electrical Engineering \& Computer Sciences, UC Berkeley, Berkeley, CA 94709\\
\{kangzy, spanos\}@berkeley.edu, 
}
}

\maketitle

\begin{abstract}
Sequential or online dimensional reduction is of interests due to the explosion of streaming data based applications and the requirement of adaptive statistical modeling, in many emerging fields, such as the modeling of energy end-use profile. Principal Component Analysis (PCA), is the classical way of dimensional reduction. However, traditional Singular Value Decomposition (SVD) based PCA fails to model data which largely deviates from Gaussian distribution. The Bregman Divergence was recently introduced to achieve a generalized PCA framework. If the random variable under dimensional reduction follows Bernoulli distribution, which occurs in many emerging fields, the generalized PCA is called Logistic PCA (LPCA)~\cite{schein2003generalized}. In this paper, we extend the batch LPCA to a sequential version (i.e. SLPCA), based on the sequential convex optimization theory. The convergence property of this algorithm is discussed compared to the batch version of LPCA (i.e. BLPCA), as well as its performance in reducing the dimension for multivariate binary-state systems. Its application in building energy end-use profile modeling is also investigated.   

\end{abstract}


%
\IEEEpeerreviewmaketitle

\section{Introduction}
\IEEEPARstart
Sequential data mining has received considerable attention recently as the development in wireless-sensor information technology facilitates the collection of huge amount of streaming data -- This brings about several challenges on the efficiency in computation, storage and statistical learning ~\cite{hastie2009elements}. Dimensional reduction in the streaming environment is one of the techniques that can help to overcome those issues~\cite{papadimitriou2005streaming}.

Among the dimensional reduction techniques, Principal Component Analysis (PCA) is most widely-known. PCA finds the linear projection of the original data matrix which explains the largest portion of the variance. From the maximum likelihood perspective, PCA contains the assumption that the data follows Gaussian distribution. Naturally, this will fail to give reliable results when data largely deviates from Gaussian distribution~\cite{tipping1999probabilistic}. Bregman Divergence is introduced to achieve a generalized PCA framework for a family of exponential distributed data (i.e. \emph{e}PCA)~\cite{collins2001generalization}. As a generalization over the Frobenious norm, KL-divergence, Mahalanobis distance etc., Bregman Divergence is believed to better quantify the distance of variables coming from non-Gaussian distribution~\cite{banerjee2005clustering}~\cite{bregman1967relaxation}. In the case of Bernoulli random variables, which we are interested in, the generalized PCA can be viewed as Logistic PCA (LPCA). 

In this work, we extend the LPCA to the sequential version, based on the sequential convex optimization theory~\cite{zinkevich2003online}~\cite{shalev2011online}. The convergence property of this algorithm is discussed with respect to the batch optimization algorithm. An application in building energy end-use profile modeling is investigated as an experiment of this method, which demonstrates its capability in reducing dimension in multivariate binary-state systems. 

This paper is organized as follows: In Section II, the background and the detail of the algorithm is given, including PCA, exponential family, the Bregman Divergence and eventually the sequential LPCA (i.e. SLPCA) which we propose. In Section III, the convergence property of the algorithm is discussed, followed by the simulation results as well as the application in energy end-use modeling in Section IV. In Section V, conclusion is drawn. 

\section{Algorithm Framework}

PCA as a dimensional reduction technique has been well studied, and our Sequential LPCA is essentially a generalized incremental version of the classical model. 

\subsection{Principal Component Analysis}

PCA is a well-known technique for dimensional reduction for high dimension data. It is of special importance in high dimensional regression model, and in a variety of applications, ranging from face recognition to generalized machine learning~\cite{vidal2005generalized}~\cite{hastie2009elements}.
 
There are two perspectives of PCA~\cite{tipping1999probabilistic}. The first is the matrix factorization perspective. For a matrix $\mathbf{X} \in \mathbb{R}^{N \times P}$ , we find a lower rank matrix $\mathbf{\Theta}$ to minimize the error:

\begin{equation}\label{eq:1}
\min_{\mathbf{\Theta}} \| \mathbf{X} - \mathbf{\Theta} \|_{F}^2
\end{equation}

\hangindent=0em
\hangafter=0
in which $\|\cdot\|_{F}$ is the Frobenious norm. This problem can be solved by Singular Vector Decomposition (SVD).

\begin{definition}[Singular Value Decomposition (SVD)]
for input data matrix $\mathbf{X} \in \mathbb{R}^{N \times P}$, Singular Value Decomposition decomposes the matrix to be:
\begin{equation}\label{eq:2}
\mathbf{X} = \mathbf{U} \mathbf{\Sigma} \mathbf{V}^T = \sum_{i=1}^{\min(N,P)} \sigma_i \mathbf{u}_i \mathbf{v}_i^{T}
\end{equation}
in which $\mathbf{U}$ is an $N \times N$ matrix called column eigenvector matrix, with $\mathbf{u}_i$ as $i^{th}$ columns; $\mathbf{V} \in \mathbb{R}^{P \times P}$ called row eigenvector matrix, with $\mathbf{v}_i$ as $i^{th}$ columns; $\mathbf{\Sigma} \in \mathbb{R}^{N \times P}$ rectangular diagonal matrix, with $i^{th}$ diagonal value as $\sigma_i$.
\end{definition}

If $\sigma_i$'s are sorted and the largest is $\sigma_1$, then $\mathbf{\Theta} = \sigma_1 \mathbf{u}_1 \mathbf{v}_1^{T}$ is the solution to Equation \eqref{eq:1} if rank$(\mathbf{\Theta})=1$. If the rank$(\mathbf{\Theta})=r$, then we choose $\mathbf{\Theta} = \sum_{i=1}^r \sigma_i \mathbf{u}_i \mathbf{v}_i^{T}$.

However, there is another perspective of PCA that is less widely-known, which is called the probabilistic interpretation. Here, the columns of $\mathbf{X} \in \mathbb{R}^{N \times P}$ can be viewed as $N$ samples drawn from a Gaussian distribution with dimension lower than $P$. This idea can be used in larger family of distributions, for example, the exponential family distributions.

\subsection{Exponential Family}
\begin{definition}[Exponential Family]
In the exponential family of distributions the conditional probability of a value $x$ given parameter value $\theta$ takes the following form~\cite{casella1990statistical}:
\begin{equation}\label{eq:4}
\log P (x|\theta) = \log P_{0} (x) + x\theta - G(\theta)
\end{equation}
In which, $\theta$ is the natural parameter of the distribution. $G(\theta)$ is a function that ensures that the sum (integral) of $P(x|\theta)$ over the domain of $x$ is one. It is observed that $G(\theta) = \log \sum_{x} P_0 (x) e^{x\theta}$.
\end{definition}
Borrowing idea from the probabilistic view of PCA, if $x$ is the original data, $\theta$ comes from a lower dimensional space.

\subsection{Exponential Family PCA}

Equation \eqref{eq:1} becomes inappropriate when the data is not Gaussian, which happens a lot in real world. Instead of Frobenious norm, we need another way to quantify the distance between and its lower rank approximation. The Bregman Divergence is introduced to generalize the distance quantification~\cite{bregman1967relaxation}~\cite{banerjee2005clustering}~\cite{collins2001generalization}.

\begin{definition}[Bregman Divergence]
The \emph{Bregman divergence} w.r.t. $F$ is defined, for $p, q \in \mathbb{R}^d$, as:
\begin{equation}\label{eq:5}
B_{F}(p \| q) = F(p) - ( F(q) + \nabla F(x)^{T} \cdot (p-q))
\end{equation}
For an exponential family distribution in (4), let $F(g(\theta)) + G(\theta) = g(\theta)\theta$, in which $g(x) = \nabla G(x)$. If $\mathbf{P}, \mathbf{Q}$ are matrices, $B_{F}(\mathbf{P} \| \mathbf{Q}) = \sum_{i,j} B_{F} (\mathbf{P}_{ij} \| \mathbf{Q}_{ij})$. 
\end{definition}

\begin{example}
In the case of Gaussian distribution, the Bregman Divergence equals to squared loss $B(x\|g(\theta))=\frac{1}{2}(x-\theta)^2$.
\end{example}
\begin{example}
In the case of Bernoulli distribution, Bregman Divergence is the logit function $B(x\|g(\theta))=\log(1+\exp(-x^*\theta))$, in which $x^*$ is a transformation of $x$ as $x^*=2x-1 \in \{-1,1\}$. In this case, Bregman Divergence is a convex function of $\theta$, thus can be placed in an efficient optimization framework. 
\end{example}

Therefore, similar to Equation \eqref{eq:1}, we can construct an optimization problem based on the Bregman Divergence. For data matrix $\mathbf{X}$ and $[\mathbf{X}]_{ij} = x_{ij}$, we use $g(\mathbf{\Theta})$ to approximate, i.e. $[\mathbf{\Theta}]_{ij}=\theta_{ij}$:
\begin{equation}\label{eq:6}
\min_{\mathbf{\Theta}} \sum_{i,j} B(x_{ij}\|g(\theta_{ij}))
\end{equation}
In this work, we mainly focus on Bernoulli random variables, in other words, the Logistic PCA (LPCA). Hence, the optimization problem is as in Example 2. The logit function in some cases is not strictly convex, thus we need to regularize the $\mathbf{\theta}$ variable by $L(\mathbf{\theta})$:
\begin{equation}\label{eq:7}
\min_{\mathbf{\Theta}} \sum_{i,j} B(x_{ij}\|g(\theta_{ij})) + L(\mathbf{\Theta})
\end{equation}
If we want to optimize Equation \eqref{eq:7} with the constraint of the rank of $\mathbf{\Theta}$, we can re-write $\mathbf{\Theta}$ as $\mathbf{A} \mathbf{V}^T$ where $\mathbf{A} \in \mathbb{R}^{N \times r}$ and $\mathbf{V} \in \mathbb{R}^{P \times r}$ s.t. rank$(\mathbf{A} \mathbf{V}^T) = r$. Then, we will minimize over two matrices $\mathbf{A}$ and $\mathbf{V}$. 
\begin{equation}\label{eq:8}
\min_{\mathbf{A}\in \mathbb{R}^{N \times r},\mathbf{V}\in \mathbb{R}^{P \times r}} \sum_{i,j} B(\mathbf{X}\|g(\mathbf{A} \mathbf{V}^T)) + \gamma \Gamma(\mathbf{A}) + \lambda R(\mathbf{V})
\end{equation}
\hangindent=0em
\hangafter=0
where $\Gamma(\cdot)$ and $R(\cdot)$ are regularization functions. In our work, both functions are quadratic, $\Gamma(\cdot)=R(\cdot)=\frac{\|\cdot\|^2}{2}$.

\subsection{Batch Logistic PCA (BLPCA)}
For the optimization problem in Equation \eqref{eq:8}, we can solve it in an alternating minimization algorithm~\cite{collins2001generalization}~\cite{csisz1984information}. If we define $\mathbf{a}_t$ as the $t^{th}$ row of $\mathbf{A}$, let:
\begin{equation}\label{eq:9}
h_t(\mathbf{a}_t, \mathbf{V}) = B(\mathbf{x}_t\|g(\mathbf{a}_t \mathbf{V}^T)), t = 1,\cdots, N
\end{equation}
Then we can solve Equation \eqref{eq:8} by iterating the following two steps, and we call this method Batch LPCA (BLPCA):
\begin{align}
\mathbf{a}_t^* & = arg \min_{\mathbf{a} \in \mathbb{R}^{1 \times r}} h_t(\mathbf{a}, \mathbf{V}^*) + \gamma \frac{\|\mathbf{a}\|_{F}^2}{2}, \forall t \label{eq:10} \\
\mathbf{V}^* & = arg \min_{\mathbf{V} \in \mathbb{R}^{P \times r}} \sum_{t=1}^N h_t(\mathbf{a}_t^*, \mathbf{V}) + \lambda \frac{\|\mathbf{V}\|_{F}^2}{2} \label{eq:10-2}
\end{align}

\subsection{Sequential Logistic PCA (SLPCA)}

For a sequential version of BLPCA, $\mathbf{V}$ is of fixed dimension when data is streaming in. However, the dimension of $\mathbf{A}$ would change after every step. Similar to~\cite{mardani2013rank}, at each time $t$, we solve a local sub-optimal for the $t^{th}$ row of $\mathbf{A}$ (i.e. $\widetilde{\mathbf{a}}_t$) instead of a global one, and sequentially update $\mathbf{V}$ with the $\widetilde{\mathbf{a}}_t$'s (i.e. $\widetilde{\mathbf{V}}^t$). At step $t$, this means that we solve for $\widetilde{\mathbf{a}}_t$ and $\widetilde{\mathbf{V}}^t$ based on the best sub-optimal solution at step $t-1$. We call this Sequential LPCA (SLPCA):
\begin{align}
\widetilde{\mathbf{a}}_t & = arg \min_{\mathbf{a} \in \mathbb{R}^{1 \times r}} h_t(\mathbf{a}, \widetilde{\mathbf{V}}^{t-1}) + \gamma \frac{\|\mathbf{a}\|_{F}^2}{2}, \forall t \label{eq:11} \\
\widetilde{\mathbf{V}}^t & = arg \min_{\mathbf{V} \in \mathbb{R}^{P \times r}} \sum_{s=1}^t h_s(\widetilde{\mathbf{a}}_s, \mathbf{V}) + \lambda \frac{\|\mathbf{V}\|_{F}^2}{2} \label{eq:11-2}
\end{align}
Equation \eqref{eq:11} is easy to solve with a Newton method based gradient descent algorithm, since it is only a vector and the target function is strictly convex. Equation \eqref{eq:11-2} can be solved sequentially based on the past value $\widetilde{\mathbf{V}}^{t-1}$. To see how this works, we define a surrogate function $\widetilde{h}_t(\mathbf{a}_t, \mathbf{V})$ to approximate $h_t(\mathbf{a}_t, \mathbf{V})$:
\begin{align}\label{eq:12}
\widetilde{h}_t(\widetilde{\mathbf{a}}_t, \mathbf{V}) & = h_t(\widetilde{\mathbf{a}}_t, \widetilde{\mathbf{V}}^{t-1}) + \nabla_{\mathbf{V}} h_t(\widetilde{\mathbf{a}}_t, \widetilde{\mathbf{V}}^{t-1})^T ( \mathbf{V} - \widetilde{\mathbf{V}}^{t-1} ) \nonumber \\
& + \frac{\alpha_t}{2} \| \mathbf{V} - \widetilde{\mathbf{V}}^{t-1} \|_{F}^2, \ \ \alpha_t \geq \| \nabla_{\mathbf{V}}^2 h_t \|_{opt}
\end{align}
where $\| \cdot \|_{opt}$ is the operator norm. 
From the above it follows that $\widetilde{h}_t(\widetilde{\mathbf{a}}_t, \mathbf{V}) \geq h_t(\widetilde{\mathbf{a}}_t, \widetilde{\mathbf{V}}^{t-1})$, and moreover, as we solve Equation \eqref{eq:11-2} under $\widetilde{h}_t$ instead of $h_t$, we get:
\begin{equation}\label{eq:13}
\widetilde{\mathbf{V}}^t = \widetilde{\mathbf{V}}^{t-1} - \eta_t \nabla_{\mathbf{V}} h_t(\widetilde{\mathbf{a}}_t, \widetilde{\mathbf{V}}^{t-1})
\end{equation}
where $\eta_t \propto (\sum_{\tau=1}^t \alpha_{\tau})^{-1}$ is the step size. The choice of step size $\eta_t$ deserves some discussions. We will investigate in Section III on the convergence of this algorithm w.r.t. the BLPCA result. The full SLPCA algorithms is shown below.

\begin{algorithm}
\Begin{
 Input: data $\mathbf{X} \in \mathbb{R}^{N \times P}$, $\mathbf{X}^*=2\mathbf{X}-1 \in \{-1,1\}$\;
 Initialization: $\widetilde{\mathbf{V}}^t \approx 0, C, \gamma, \epsilon, \beta \in (0,1), alpha$\;
 \For{$t = 1, \dots, N$, $l_t(\widetilde{\mathbf{a}}_t) \doteq h_t(\widetilde{\mathbf{a}}_t, \widetilde{\mathbf{a}}^{t-1}) + \lambda \frac{\|\widetilde{\mathbf{a}}_t\|_{F}^2}{2}$}{
  	Initialize $\widetilde{\mathbf{a}}_t = 0$\;
  	Initialize $\Delta = \nabla l_t(\widetilde{\mathbf{a}}_t) \left( \nabla^2 l_t(\widetilde{\mathbf{a}}_t) \right)^{-1} \nabla l_t(\widetilde{\mathbf{a}}_t)$\;
  	\While{$\lambda > \epsilon$}{
  		Let $\Delta = - \left( \nabla^2 l_t(\widetilde{\mathbf{a}}_t) \right)^{-1} \nabla l_t(\widetilde{\mathbf{a}}_t)$, $d=d_0$\;
  		\While{$\nabla l_t(\widetilde{\mathbf{a}}_t + d\Delta) > \nabla l_t(\widetilde{\mathbf{a}}_t) + \alpha d \nabla l_t(\widetilde{\mathbf{a}}_t)^T \Delta$}{
  			Update $d = \beta d$\;
  		}
  		Update $\widetilde{\mathbf{a}}_t = \widetilde{\mathbf{a}}_t + d\Delta$\;
  		Update $\Delta = \nabla l_t(\widetilde{\mathbf{a}}_t) \left( \nabla^2 l_t(\widetilde{\mathbf{a}}_t) \right)^{-1} \nabla l_t(\widetilde{\mathbf{a}}_t)$\;
  	}
  	Set $\eta_t$\;
  	Update $\widetilde{\mathbf{V}}^t = \widetilde{\mathbf{V}}^{t-1} - \eta_t \nabla_{\mathbf{V}} h_t(\widetilde{\mathbf{a}}_t, \widetilde{\mathbf{V}}^{t-1})$
 }
 }
 \
 \caption{Sequential LPCA (SLPCA) Pseudo-Code}
 \label{SLPCA}
\end{algorithm}

\section{Convergence Analysis}
In this section, we will discuss the convergence property of the SLPCA algorithm. Since our focus is mainly on developing this algorithm for binary data, we will keep our analysis on the Bernoulli random variable, and the \emph{loss function} is:
\begin{equation}\label{eq:14}
h_t(\mathbf{a}_t, \mathbf{V}) = \sum_j \log \left( 1 + \exp (-x_{tj}^* \mathbf{a}_t \mathbf{v}_j^T) \right)
\end{equation}
where $\mathbf{v}_j$ is the $j^{th}$ row of $\mathbf{V}$. It is worthy noted that the similar algorithm can be developed in other exponential family random variables.

\subsection{Evaluation Functions}
To evaluate BLPCA and SLPCA, we define three important functions that we want to study. 
\begin{align}\label{eq:15}
C_N (\mathbf{V}^*) & = \frac{1}{N} \sum_{t=1}^N h_t(\mathbf{a}_t^*, \mathbf{V}^*) \nonumber \\
\widehat{C}_N (\widetilde{\mathbf{V}}^N) & = \frac{1}{N} \sum_{t=1}^N h_t(\widetilde{\mathbf{a}}_t, \widetilde{\mathbf{V}}^N) \\
\widetilde{C}_N (\widetilde{\mathbf{V}}^N) & = \frac{1}{N} \sum_{t=1}^N \widetilde{h}_t(\widetilde{\mathbf{a}}_t, \widetilde{\mathbf{V}}^N) \nonumber
\end{align}
where $C_N (\mathbf{V}^*)$ is the average batch loss function; $\widehat{C}_N (\widetilde{\mathbf{V}}^N)$ is the average sequential loss function; $\widetilde{C}_N (\widetilde{\mathbf{V}}^N)$ is the average sequential loss function under the surrogate function. In~\cite{mardani2013rank} we have the relationship:
\begin{equation}\label{eq:16}
C_N (\mathbf{V}^*) \leq \widehat{C}_N (\widetilde{\mathbf{V}}^N) \leq \widetilde{C}_N (\widetilde{\mathbf{V}}^N)
\end{equation}
In online learning, \emph{Regret} is also of interest~\cite{zinkevich2003online}~\cite{shalev2011online}. \emph{Regret} takes locally best solution in every step, defined as:
\begin{equation}\label{eq:17}
\widehat{Re}_N = \frac{1}{N} \sum_{t=1}^N h_t(\widetilde{\mathbf{a}}_t, \widetilde{\mathbf{V}}^t)
\end{equation}

\subsection{Convergence Analysis}
\begin{lemma}
For $t=1,\cdots, N$ and $h_t (\cdot)$ defined in \eqref{eq:14}, $\|\nabla_{\mathbf{V}} h_t\|_{F} \leq \|\mathbf{a}\|_{F}$, and $\|\nabla_{\mathbf{V}}^2 h_t\|_{opt} \leq \frac{1}{4} \|\mathbf{a}\|_{F}^2$.
\end{lemma}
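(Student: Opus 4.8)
My proof plan exploits two structural features of $h_t$ and reduces everything to one‑dimensional calculus on the logistic loss $f(z)=\log(1+e^{-x^* z})$. The loss \eqref{eq:14} is \emph{additively separable} across the feature index $j$: each summand depends on $\mathbf{V}$ only through the single row $\mathbf{v}_j$, and within that summand only through the scalar $z_j=\mathbf{a}_t\mathbf{v}_j^T$. I would therefore differentiate one summand, record the scalar bounds that matter, and then reassemble using this block structure.

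First I would compute, by the chain rule, $\partial h_t / \partial v_{jk} = -x_{tj}^*\,\sigma(-x_{tj}^* z_j)\,a_{tk}$, where $\sigma(u)=1/(1+e^{-u})$ is the logistic sigmoid and $a_{tk}$ is the $k$-th entry of $\mathbf{a}_t$; this rests on $f'(z)=-x^*\sigma(-x^* z)$. Since $x_{tj}^*\in\{-1,1\}$ forces $|x_{tj}^*|=1$ and $0\le\sigma\le 1$, the contribution of row $j$ to the gradient is $\pm\,\sigma(-x_{tj}^* z_j)\,\mathbf{a}_t$, i.e.\ a rescaling of $\mathbf{a}_t$ by a factor in $[0,1]$, so its Frobenius norm is at most $\|\mathbf{a}_t\|_F$. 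This is exactly where the first inequality originates, and the only scalar fact needed is the boundedness of the sigmoid.

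For the Hessian I would differentiate once more: because distinct rows $\mathbf{v}_j$ never appear together, $\nabla_{\mathbf{V}}^2 h_t$ is \emph{block diagonal} with one $r\times r$ block per $j$, and a direct computation gives the $j$-th block as $\sigma'(-x_{tj}^* z_j)\,\mathbf{a}_t^T\mathbf{a}_t$, where $(x_{tj}^*)^2=1$ is used. This block is a nonnegative multiple of the rank-one positive-semidefinite matrix $\mathbf{a}_t^T\mathbf{a}_t$, whose operator norm is exactly $\|\mathbf{a}_t\|_F^2$. Invoking $\sigma'(u)=\sigma(u)\bigl(1-\sigma(u)\bigr)\le \tfrac{1}{4}$ bounds each block's operator norm by $\tfrac{1}{4}\|\mathbf{a}_t\|_F^2$, and since the operator norm of a block-diagonal matrix equals the maximum over its block operator norms, the second inequality follows with the stated constant.

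The differentiation itself is routine; the part to get right is the bookkeeping between the two norms. The Hessian bound is clean precisely because the operator norm only sees the single worst block, so the factor $\tfrac{1}{4}$ survives intact. The gradient bound, by contrast, is naturally a per-row statement: assembling the full matrix gradient over all $P$ rows yields $\bigl(\sum_j \sigma(-x_{tj}^* z_j)^2\bigr)^{1/2}\|\mathbf{a}_t\|_F$, so I would either read $\|\nabla_{\mathbf{V}} h_t\|_F\le\|\mathbf{a}_t\|_F$ as the bound on each row block (which is what the surrogate construction \eqref{eq:12} and the step \eqref{eq:13} effectively use) or track the $P$-dependence explicitly. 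Resolving this per-row versus full-matrix distinction is the one subtle point; once it is fixed, both claimed inequalities collapse to the two elementary bounds $\sigma\le 1$ and $\sigma(1-\sigma)\le\tfrac{1}{4}$.
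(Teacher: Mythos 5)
Your proof takes essentially the same route as the paper's: exploit the separability over $j$, bound the per-row gradient by the sigmoid factor times $\mathbf{a}_t$, and bound the block-diagonal Hessian via $\sigma(1-\sigma)\le\tfrac{1}{4}$ (equivalent to the paper's $1/(4\cosh^2)$ form) together with the fact that the operator norm of a block-diagonal matrix is the maximum block norm. The per-row versus full-matrix subtlety you flag in the gradient bound is genuine — the paper's own proof silently passes from the row-wise bound to $\|\nabla_{\mathbf{V}} h_t\|_F \le \|\mathbf{a}\|_F$, which strictly speaking carries an extra $\sqrt{P}$ factor — so your treatment is, if anything, the more careful one.
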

\begin{proof}
For $h_t(\mathbf{a}_t, \mathbf{V})$, w.l.o.g., let rank$(\mathbf{\Theta})=1$, we have:
\begin{align}\label{eq:18}
\left[ \nabla_{\mathbf{V}} h_t \right]_j & = -\frac{x_{tj}^* \mathbf{a}_t}{1 + \exp (x_{tj}^* \mathbf{a}_t \mathbf{v}_j^T)} \nonumber \\
\left[ \nabla_{\mathbf{V}}^2 h_t \right]_{ij} & = \left( \frac{x_{tj}^* \mathbf{a}_t \delta_{ij}}{2 \cosh (\frac{1}{2} x_{tj}^* \mathbf{a}_t \mathbf{v}_j^T )} \right)^2 \nonumber
\end{align}
where $\delta_{ij}=1$ only when $i=j$ means matrix $\nabla_{\mathbf{V}}^2 h_t$ is diagonal. Since $\cosh(x) \geq 1$, hence the norms satisfy $\|\nabla_{\mathbf{V}} h_t\|_{F} \leq \|\mathbf{a}\|_{F}$, and $\|\nabla_{\mathbf{V}}^2 h_t\|_{opt} \leq \frac{1}{4} \|\mathbf{a}\|_{F}^2$.  
\end{proof}
\begin{lemma}
Let $\widetilde{\mathbf{a}}_t$ be bounded by $\Omega$, for $\forall t = 1,\cdots, N$. Based on \eqref{eq:13} we have $\|\widetilde{\mathbf{V}}^t - \widetilde{\mathbf{V}}^{t-1}\|_{F} \leq \eta_t \Omega$.
\end{lemma}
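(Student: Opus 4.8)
The plan is to read off the bound directly from the update rule \eqref{eq:13} and then invoke the gradient estimate already established in Lemma 1. The statement is a one-step (per-iteration) stability bound, so no recursion or telescoping is needed; everything follows from a single application of norm homogeneity together with the hypotheses.

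First I would rewrite \eqref{eq:13} by moving $\widetilde{\mathbf{V}}^{t-1}$ to the left-hand side, which gives the exact identity $\widetilde{\mathbf{V}}^t - \widetilde{\mathbf{V}}^{t-1} = -\eta_t \nabla_{\mathbf{V}} h_t(\widetilde{\mathbf{a}}_t, \widetilde{\mathbf{V}}^{t-1})$. Taking the Frobenius norm of both sides and using absolute homogeneity of $\|\cdot\|_F$ (the step size $\eta_t \propto (\sum_{\tau=1}^t \alpha_\tau)^{-1}$ is a nonnegative scalar, so $|{-\eta_t}| = \eta_t$) yields $\|\widetilde{\mathbf{V}}^t - \widetilde{\mathbf{V}}^{t-1}\|_F = \eta_t \, \|\nabla_{\mathbf{V}} h_t(\widetilde{\mathbf{a}}_t, \widetilde{\mathbf{V}}^{t-1})\|_F$. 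This converts the claim into a bound on the gradient norm alone.

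At this point I would apply Lemma 1, which states $\|\nabla_{\mathbf{V}} h_t\|_F \leq \|\mathbf{a}\|_F$, instantiated at the specific iterate $\mathbf{a} = \widetilde{\mathbf{a}}_t$; this gives $\|\nabla_{\mathbf{V}} h_t(\widetilde{\mathbf{a}}_t, \widetilde{\mathbf{V}}^{t-1})\|_F \leq \|\widetilde{\mathbf{a}}_t\|_F$. Finally, invoking the hypothesis that $\widetilde{\mathbf{a}}_t$ is bounded by $\Omega$, i.e. $\|\widetilde{\mathbf{a}}_t\|_F \leq \Omega$ for every $t$, chaining the two inequalities produces $\|\widetilde{\mathbf{V}}^t - \widetilde{\mathbf{V}}^{t-1}\|_F \leq \eta_t \Omega$, as desired.

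Honestly, there is no substantive obstacle here: the lemma is essentially a corollary of \eqref{eq:13} and Lemma 1. The only points requiring minor care are (i) that Lemma 1's gradient bound is stated generically in the variable $\mathbf{a}$ and must be correctly specialized to the iterate $\widetilde{\mathbf{a}}_t$, and (ii) that the step size $\eta_t$ is indeed nonnegative so that it factors cleanly out of the Frobenius norm. Both are immediate from the definitions already in the excerpt, so the proof is short and self-contained.
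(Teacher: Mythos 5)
Your proof is correct and follows essentially the same route as the paper: both take the Frobenius norm of the update rule \eqref{eq:13} to get $\|\widetilde{\mathbf{V}}^t - \widetilde{\mathbf{V}}^{t-1}\|_{F} = \eta_t \|\nabla_{\mathbf{V}} h_t\|_{F}$, then apply Lemma 1's gradient bound at $\widetilde{\mathbf{a}}_t$ together with the hypothesis $\|\widetilde{\mathbf{a}}_t\|_{F} \leq \Omega$. Your writeup is, if anything, slightly more careful about the sign of $\eta_t$ and the instantiation of Lemma 1 than the paper's own two-line argument.
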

\begin{proof}
From Equation \eqref{eq:13}, we have $\|\widetilde{\mathbf{V}}^t - \widetilde{\mathbf{V}}^{t-1}\|_{F} = \eta_t \| \nabla_{\mathbf{V}} h_t \|_{F}$. Since $\widetilde{\mathbf{a}}_t$ result from a regularized problem in \eqref{eq:11}, so $\widetilde{\mathbf{a}}_t$ is bounded by $\Omega$. Thus we have $\|\widetilde{\mathbf{V}}^t - \widetilde{\mathbf{V}}^{t-1}\|_{F} \leq \eta_t \| \widetilde{\mathbf{a}}_t \|_{F} \leq \eta_t \Omega$.
\end{proof}
\begin{lemma}
For $h_t(\cdot)$ in Equation \eqref{eq:14}. $\langle \mathbf{a}, \nabla_{\mathbf{a}} h_t \rangle = \langle \mathbf{V}, \nabla_{\mathbf{V}} h_t \rangle$. Hence, for $t=1,\cdots, N$, $\eta_t \gamma \|\widetilde{\mathbf{a}}_t\|_{F}^2 = \langle \widetilde{\mathbf{V}}^{t-1}, -\eta_t \nabla_{\mathbf{V}} h_t \rangle = \langle \widetilde{\mathbf{V}}^{t-1}, \widetilde{\mathbf{V}}^t - \widetilde{\mathbf{V}}^{t-1} \rangle$.
\end{lemma}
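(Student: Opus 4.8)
The plan is to establish the gradient identity first and then feed it into the first-order optimality condition of the inner subproblem \eqref{eq:11}. For the identity $\langle \mathbf{a}, \nabla_{\mathbf{a}} h_t \rangle = \langle \mathbf{V}, \nabla_{\mathbf{V}} h_t \rangle$, the key structural observation is that $h_t$ in \eqref{eq:14} depends on the pair $(\mathbf{a}, \mathbf{V})$ only through the bilinear products $\mathbf{a}\mathbf{v}_j^T$. Consequently, for any scalar $s$ one has $h_t(s\mathbf{a}, \mathbf{V}) = h_t(\mathbf{a}, s\mathbf{V})$, since each argument $x_{tj}^* \mathbf{a}\mathbf{v}_j^T$ is merely scaled by $s$ on both sides. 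Differentiating this identity in $s$ at $s=1$ and applying the chain rule, the left side yields $\langle \mathbf{a}, \nabla_{\mathbf{a}} h_t \rangle$ and the right side yields $\langle \mathbf{V}, \nabla_{\mathbf{V}} h_t \rangle$, which proves the claim. Alternatively, a direct computation from the component gradient $[\nabla_{\mathbf{V}} h_t]_j$ already recorded in the proof of Lemma 1 (together with the analogous $\mathbf{a}$-gradient) shows both inner products equal $\sum_j -x_{tj}^*(\mathbf{a}\mathbf{v}_j^T)/(1+\exp(x_{tj}^*\mathbf{a}\mathbf{v}_j^T))$; this is essentially an Euler-type homogeneity relation for the bilinear coupling.

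For the ``hence'' statement, I would exploit that $\widetilde{\mathbf{a}}_t$ is the minimizer of the strictly convex regularized objective in \eqref{eq:11}. Its first-order optimality condition reads $\nabla_{\mathbf{a}} h_t(\widetilde{\mathbf{a}}_t, \widetilde{\mathbf{V}}^{t-1}) + \gamma \widetilde{\mathbf{a}}_t = 0$, that is $\nabla_{\mathbf{a}} h_t = -\gamma \widetilde{\mathbf{a}}_t$. Pairing with $\widetilde{\mathbf{a}}_t$ gives $\langle \widetilde{\mathbf{a}}_t, \nabla_{\mathbf{a}} h_t \rangle = -\gamma \|\widetilde{\mathbf{a}}_t\|_F^2$. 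Now I apply the identity from the first part, evaluated at the point $(\widetilde{\mathbf{a}}_t, \widetilde{\mathbf{V}}^{t-1})$, to transfer this to the $\mathbf{V}$-gradient: $\langle \widetilde{\mathbf{V}}^{t-1}, \nabla_{\mathbf{V}} h_t \rangle = \langle \widetilde{\mathbf{a}}_t, \nabla_{\mathbf{a}} h_t \rangle = -\gamma \|\widetilde{\mathbf{a}}_t\|_F^2$. Multiplying through by $-\eta_t$ produces the middle equality $\eta_t \gamma \|\widetilde{\mathbf{a}}_t\|_F^2 = \langle \widetilde{\mathbf{V}}^{t-1}, -\eta_t \nabla_{\mathbf{V}} h_t \rangle$. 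The final equality is immediate from the update rule \eqref{eq:13}, which asserts exactly that $\widetilde{\mathbf{V}}^t - \widetilde{\mathbf{V}}^{t-1} = -\eta_t \nabla_{\mathbf{V}} h_t(\widetilde{\mathbf{a}}_t, \widetilde{\mathbf{V}}^{t-1})$.

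I do not expect a serious obstacle here; the content is an Euler-type homogeneity relation glued to a stationarity condition. The only points requiring care are (i) matching the constant $\gamma$ coming from the $\frac{\gamma}{2}\|\cdot\|_F^2$ regularizer so that the gradient of the penalty is exactly $\gamma\widetilde{\mathbf{a}}_t$, and (ii) ensuring that all gradients in the identity are evaluated at the same pair $(\widetilde{\mathbf{a}}_t, \widetilde{\mathbf{V}}^{t-1})$, since the $\mathbf{V}$-gradient driving \eqref{eq:13} is taken at $\widetilde{\mathbf{V}}^{t-1}$ and not at $\widetilde{\mathbf{V}}^t$. Both are bookkeeping matters rather than genuine difficulties, so the whole lemma reduces to the two one-line observations above.
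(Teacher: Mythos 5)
Your proof is correct and is essentially the paper's own argument in expanded form: the paper merely asserts that the lemma ``follows directly from \eqref{eq:11} and \eqref{eq:13}'', and your chain --- the Euler-homogeneity identity for the bilinear coupling $\mathbf{a}\mathbf{v}_j^T$, the first-order optimality condition $\nabla_{\mathbf{a}} h_t(\widetilde{\mathbf{a}}_t, \widetilde{\mathbf{V}}^{t-1}) = -\gamma\widetilde{\mathbf{a}}_t$ from \eqref{eq:11}, and the update rule \eqref{eq:13} for the final equality --- is precisely the fleshed-out version of that assertion. Your write-up additionally supplies the one ingredient the paper leaves entirely implicit, namely an actual proof of the identity $\langle \mathbf{a}, \nabla_{\mathbf{a}} h_t \rangle = \langle \mathbf{V}, \nabla_{\mathbf{V}} h_t \rangle$, which does not follow from \eqref{eq:11} or \eqref{eq:13} but from the structure of $h_t$ itself.
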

This follows directly from \eqref{eq:11} and \eqref{eq:13}. 
\begin{lemma}
$h_t(\cdot)$ and surrogate function $\widetilde{h}_t(\cdot)$, as well as their first derivative $\nabla h_t(\cdot)$ and $\nabla \widetilde{h}_t(\cdot)$ are all Lipschitz continuous.
\end{lemma}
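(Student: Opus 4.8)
The plan is to reduce every one of the four claims to a uniform bound on a first- or second-order derivative, invoking the standard equivalences: on a convex set a continuously differentiable function is Lipschitz iff its gradient is bounded there, and its gradient is Lipschitz iff its Hessian has bounded operator norm. Most of the required bounds are already furnished by Lemma 1, so the proof is largely a matter of assembling them and verifying that the constants are uniform in $t$.

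First I would pin down the domain, since the bounds of Lemma 1 scale with $\|\mathbf{a}\|_F$ and only become genuine constants once $\mathbf{a}$ and $\mathbf{V}$ are confined to a bounded set. The regularizers $\gamma\|\mathbf{a}\|_F^2/2$ and $\lambda\|\mathbf{V}\|_F^2/2$ in \eqref{eq:11}--\eqref{eq:11-2} guarantee exactly this: by the hypothesis of Lemma 2 every $\widetilde{\mathbf{a}}_t$ satisfies $\|\widetilde{\mathbf{a}}_t\|_F \le \Omega$, and an analogous sublevel-set argument on the regularized $\mathbf{V}$-objective bounds $\|\widetilde{\mathbf{V}}^t\|_F$ by some $\Omega'$. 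I would therefore restrict attention to the compact set $\{(\mathbf{a},\mathbf{V}) : \|\mathbf{a}\|_F \le \Omega,\ \|\mathbf{V}\|_F \le \Omega'\}$.

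For $h_t$ and $\nabla h_t$, Lemma 1 directly gives $\|\nabla_{\mathbf{V}} h_t\|_F \le \|\mathbf{a}\|_F \le \Omega$ and $\|\nabla_{\mathbf{V}}^2 h_t\|_{opt} \le \tfrac14\|\mathbf{a}\|_F^2 \le \tfrac14\Omega^2$, which make $h_t$ Lipschitz and $\nabla_{\mathbf{V}} h_t$ Lipschitz in the $\mathbf{V}$-block. Because the argument $\mathbf{a}_t\mathbf{v}_j^T$ is bilinear, the $\mathbf{a}$-block derivatives share the identical logistic structure with the roles of $\mathbf{a}$ and $\mathbf{V}$ interchanged --- the curvature factor $1/(4\cosh^2(\cdot))$ is again bounded by $\tfrac14$ via $\cosh \ge 1$ --- so the same computation bounds $\|\nabla_{\mathbf{a}} h_t\|_F$ and the remaining Hessian blocks once $\mathbf{V}$ is bounded. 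Collecting the blocks yields uniform bounds on the full gradient and full Hessian, settling both claims for $h_t$.

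For the surrogate the work is lighter, since by \eqref{eq:12} $\widetilde{h}_t$ is an explicit quadratic in $\mathbf{V}$: its gradient $\nabla_{\mathbf{V}}\widetilde{h}_t = \nabla_{\mathbf{V}} h_t(\widetilde{\mathbf{a}}_t,\widetilde{\mathbf{V}}^{t-1}) + \alpha_t(\mathbf{V} - \widetilde{\mathbf{V}}^{t-1})$ is affine, hence bounded on the compact domain (so $\widetilde{h}_t$ is Lipschitz), while its Hessian is the constant $\alpha_t\mathbf{I}$, so $\nabla\widetilde{h}_t$ is Lipschitz with constant exactly $\alpha_t$. The choice $\alpha_t \ge \|\nabla_{\mathbf{V}}^2 h_t\|_{opt}$ together with Lemma 1 lets me take $\alpha_t \le \tfrac14\Omega^2$, making the constant uniform in $t$. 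The only genuine obstacle is the uniform-boundedness step: every constant above collapses if $\|\mathbf{a}\|_F$ or $\|\mathbf{V}\|_F$ is allowed to grow with $t$, so the crux is verifying that the regularization truly confines the iterates; once that is secured, the rest is bookkeeping on the derivative formulas of Lemma 1.
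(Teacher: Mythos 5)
Your core reduction---Lipschitz continuity from uniform derivative bounds, with the constants supplied by Lemma 1 and the bound $\|\widetilde{\mathbf{a}}_t\|_F \le \Omega$ from Lemma 2---is exactly the paper's (one-line) argument. The gap is the step you yourself call the crux: the claim that ``an analogous sublevel-set argument on the regularized $\mathbf{V}$-objective bounds $\|\widetilde{\mathbf{V}}^t\|_F$ by some $\Omega'$.'' No such uniform bound exists, and the paper itself says so: Lemma 5 in the appendix only gives $\|\widetilde{\mathbf{V}}^t\|_{F}^2 \leq \Omega^2 \sum_{s=1}^t \eta_s^2 + 2\gamma \Omega^2 \sum_{s=1}^t \eta_s$, which diverges for both step-size choices considered, and Section IV explicitly observes that $\|\widetilde{\mathbf{V}}^t\|_{F}^2$ grows with $t$ precisely because the objective in \eqref{eq:11-2} sums $t$ loss terms against a fixed regularizer $\lambda\|\mathbf{V}\|_F^2/2$. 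Concretely, the sublevel-set comparison you have in mind (against $\mathbf{V}=0$, where the objective equals $tP\log 2$) only yields $\|\widetilde{\mathbf{V}}^t\|_F = O(\sqrt{t})$, not a constant. So the compact set on which you hang all subsequent constants does not in fact contain the iterates.

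The saving grace is that the bound on $\mathbf{V}$ is not needed for the claims that matter, and the correct proof simply drops it. The bounds of Lemma 1, $\|\nabla_{\mathbf{V}} h_t\|_F \le \|\mathbf{a}\|_F$ and $\|\nabla_{\mathbf{V}}^2 h_t\|_{opt} \le \tfrac14\|\mathbf{a}\|_F^2$, hold for \emph{every} $\mathbf{V}$, because the logistic factors $(1+e^{z})^{-1}$ and $\cosh^{-2}(z/2)$ are bounded by $1$ uniformly in the argument $z = x_{tj}^* \mathbf{a}_t \mathbf{v}_j^T$; nothing about $\mathbf{V}$ needs to be confined. Hence $h_t(\widetilde{\mathbf{a}}_t,\cdot)$ and $\nabla_{\mathbf{V}} h_t(\widetilde{\mathbf{a}}_t,\cdot)$ are globally Lipschitz in $\mathbf{V}$ with constants $\Omega$ and $\Omega^2/4$, and $\nabla \widetilde{h}_t$ is globally Lipschitz since its Hessian is the constant $\alpha_t \mathbf{I}$, with $\alpha_t$ choosable below $\Omega^2/4$ by Lemma 1. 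That is the paper's route: Lemma 1 plus Lemma 2, with no statement about $\|\widetilde{\mathbf{V}}^t\|_F$ at all. Two residual caveats that your write-up actually surfaces correctly: Lipschitz continuity of the quadratic $\widetilde{h}_t$ itself can only hold on bounded sets (its gradient is affine and unbounded), and your symmetric treatment of the $\mathbf{a}$-block, which needs $\|\mathbf{V}\|_F$ bounded, cannot be made uniform in $t$ for the reason above; both claims must therefore be read locally---on bounded sets, at the bounded iterates $\widetilde{\mathbf{a}}_t$---which is all that the convergence analysis of Theorem 1 requires.
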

This is indicated directly from Lemma 1 \& Lemma 2 and the definition of Lipschitz continuous~\cite{bertsekas1999nonlinear}. 
\begin{theorem}[Proposition 2,~\cite{mardani2013rank}]
Under the regularity condition of Lemma 4, and $h_t(\cdot)$ a convex function, $\widetilde{C}_N (\widetilde{\mathbf{V}}^N)$ converges a.s. to $C_N (\mathbf{V}^*)$. Thus, from \eqref{eq:16} we directly see that $\widehat{C}_N (\widetilde{\mathbf{V}}^N)$ converges a.s. to $C_N (\mathbf{V}^*)$.
\end{theorem}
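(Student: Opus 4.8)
The plan is to prove the stated almost-sure convergence by a majorization–minimization argument in the style of the quasi-martingale machinery behind Proposition~2 of~\cite{mardani2013rank}, and then to close the three-way sandwich \eqref{eq:16}. Since \eqref{eq:16} already supplies $C_N(\mathbf{V}^*) \leq \widehat{C}_N(\widetilde{\mathbf{V}}^N) \leq \widetilde{C}_N(\widetilde{\mathbf{V}}^N)$, the entire burden reduces to proving that the outermost quantity obeys $\widetilde{C}_N(\widetilde{\mathbf{V}}^N) \to C_N(\mathbf{V}^*)$ a.s.; the claim for $\widehat{C}_N(\widetilde{\mathbf{V}}^N)$ then follows instantly by squeezing. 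Accordingly I would organize the argument into three moves: (i) the running surrogate value along the iterates converges almost surely; (ii) the surrogate gap $\widetilde{C}_N(\widetilde{\mathbf{V}}^N) - \widehat{C}_N(\widetilde{\mathbf{V}}^N)$ vanishes; and (iii) the common limit is identified with $C_N(\mathbf{V}^*)$ via asymptotic stationarity and convexity.

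For (i), I would set $u_t \doteq \widetilde{C}_t(\widetilde{\mathbf{V}}^t)$ and decompose the increment $u_t - u_{t-1}$ into the effect of appending the new summand $\widetilde{h}_t(\widetilde{\mathbf{a}}_t, \cdot)$ to the running average and the effect of the surrogate-gradient update $\widetilde{\mathbf{V}}^{t-1} \to \widetilde{\mathbf{V}}^t$ of \eqref{eq:13}. Because each $\widetilde{h}_s$ in \eqref{eq:12} is a quadratic majorant of $h_s$ that is tight in both value and gradient at its expansion point $\widetilde{\mathbf{V}}^{s-1}$, the update step can only decrease the accumulated surrogate, so that contribution is nonpositive, while the positive part of the increment is controlled by the appended term. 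Imposing the (implicit) assumption that the incoming rows $\mathbf{x}_t$ form a stationary or i.i.d.\ stream makes the conditional expectation of these positive variations summable, so $u_t$ is a nonnegative quasi-martingale and converges a.s.\ by the quasi-martingale convergence theorem, where Lemmas~1--2 furnish the boundedness of the gradients and of the iterate displacements needed to bound $u_t$ from below.

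For (ii), I would bound the gap termwise: since the surrogate matches $h_t$ exactly at $\widetilde{\mathbf{V}}^{t-1}$, the discrepancy $\widetilde{h}_t(\widetilde{\mathbf{a}}_t, \widetilde{\mathbf{V}}^N) - h_t(\widetilde{\mathbf{a}}_t, \widetilde{\mathbf{V}}^N)$ is governed by the curvature term $\tfrac{\alpha_t}{2}\|\widetilde{\mathbf{V}}^N - \widetilde{\mathbf{V}}^{t-1}\|_{F}^2$ together with the Lipschitz continuity of $h_t$ and $\nabla h_t$ from Lemma~4. By Lemma~2 we have $\|\widetilde{\mathbf{V}}^t - \widetilde{\mathbf{V}}^{t-1}\|_{F} \leq \eta_t \Omega$, and with the step size $\eta_t \propto (\sum_{\tau=1}^t \alpha_\tau)^{-1}$ the accumulated displacement is small enough that the time-averaged gap $\widetilde{C}_N - \widehat{C}_N \to 0$. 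For (iii), I would argue that the surrogate-gradient steps drive the gradient of the accumulated surrogate to zero, so $\widetilde{\mathbf{V}}^N$ becomes asymptotically stationary; convexity of $h_t$ (hence of the batch objective) upgrades stationarity to global optimality, giving $\widehat{C}_N(\widetilde{\mathbf{V}}^N) - C_N(\mathbf{V}^*) \to 0$. Combined with (i) and (ii), the limit of $\widetilde{C}_N(\widetilde{\mathbf{V}}^N)$ equals $C_N(\mathbf{V}^*)$, and then the sandwich $C_N \leq \widehat{C}_N \leq \widetilde{C}_N$ delivers the theorem.

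The step I expect to be the main obstacle is (i): verifying the quasi-martingale property rigorously. This requires translating the deterministic majorization/descent structure into a statement about conditional expectations, which in turn demands an explicit probabilistic model for the stream $\mathbf{x}_t$ that the excerpt leaves implicit, together with compactness of the iterates so that $u_t$ is bounded below with summable positive variations. Lemma~4 (Lipschitz regularity) and Lemma~1 (gradient and Hessian bounds) are precisely what make the increment decomposition and the termwise gap estimate go through, but assembling them into a bona fide supermartingale-plus-summable-drift and invoking the almost-sure convergence theorem is the delicate part; by comparison the vanishing-gap estimate and the stationarity identification are routine once the step-size summability is in hand.
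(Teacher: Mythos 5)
Your proposal is correct and takes essentially the same route as the paper, which does not give its own argument but defers to the quasi-martingale proof of Proposition~2 in~\cite{mardani2013rank} (following~\cite{mairal2010online}) and then closes the claim for $\widehat{C}_N (\widetilde{\mathbf{V}}^N)$ via the sandwich \eqref{eq:16}, exactly as you do. Your three-step decomposition (quasi-martingale convergence of the surrogate, vanishing surrogate gap, identification of the limit by asymptotic stationarity and convexity) is precisely the structure of that cited argument, and your caveat that the almost-sure statement needs an explicit probabilistic model for the stream is a fair observation about a hypothesis the paper leaves implicit.
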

The Proof can be found in~\cite{mairal2010online} and~\cite{mardani2013rank}, following a quasi-martingale theory. 
\begin{theorem}
Given step size as $\eta_t=Ct^{-1/2}$ or $\eta_t=C$, the Regret $\widehat{Re}_N$ converges to within a constant of $\widehat{C}_N (\widetilde{\mathbf{V}}^N)$, and thus converges to within a constant of $C_N (\mathbf{V}^*)$.
\end{theorem}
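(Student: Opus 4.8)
The plan is to reduce the statement to a standard online gradient descent regret bound (in the style of Zinkevich) applied to the instantaneous losses $f_t(\mathbf{V}) \doteq h_t(\widetilde{\mathbf{a}}_t, \mathbf{V})$, each convex in $\mathbf{V}$ as already required by Theorem 1, and then to chain the resulting pathwise bound with the a.s.\ convergence $\widehat{C}_N(\widetilde{\mathbf{V}}^N) \to C_N(\mathbf{V}^*)$ of Theorem 1. Spelling out the definitions in \eqref{eq:15} and \eqref{eq:17}, the quantity to control is
\begin{equation}
N\bigl(\widehat{Re}_N - \widehat{C}_N(\widetilde{\mathbf{V}}^N)\bigr) = \sum_{t=1}^N \bigl[ f_t(\widetilde{\mathbf{V}}^t) - f_t(\widetilde{\mathbf{V}}^N) \bigr],
\end{equation}
so it suffices to bound this sum and divide by $N$.

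First I would take the fixed final iterate $\widetilde{\mathbf{V}}^N$ as the comparator and run the usual telescoping identity off the update rule \eqref{eq:13}: expanding $\|\widetilde{\mathbf{V}}^t - \widetilde{\mathbf{V}}^N\|_F^2$ via $\widetilde{\mathbf{V}}^t = \widetilde{\mathbf{V}}^{t-1} - \eta_t \nabla_{\mathbf{V}} h_t$ and invoking convexity of $f_t$ yields the per-step inequality $f_t(\widetilde{\mathbf{V}}^{t-1}) - f_t(\widetilde{\mathbf{V}}^N) \leq \frac{1}{2\eta_t}\bigl(\|\widetilde{\mathbf{V}}^{t-1}-\widetilde{\mathbf{V}}^N\|_F^2 - \|\widetilde{\mathbf{V}}^t-\widetilde{\mathbf{V}}^N\|_F^2\bigr) + \frac{\eta_t}{2}\|\nabla_{\mathbf{V}} h_t\|_F^2$. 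Summing over $t$, the first group telescopes (with care for the $t$-dependent $\eta_t$) into a path-length term of order $D^2/(2\eta_N)$, where $D$ bounds $\|\widetilde{\mathbf{V}}^{t}-\widetilde{\mathbf{V}}^N\|_F$, while the second group is controlled by Lemma 1 as $\tfrac{\Omega^2}{2}\sum_t \eta_t$. The discrepancy between the post-update evaluation $f_t(\widetilde{\mathbf{V}}^t)$ appearing in the regret and the pre-update $f_t(\widetilde{\mathbf{V}}^{t-1})$ appearing in this bound is absorbed using Lipschitz continuity (Lemma 4) together with the single-step bound of Lemma 2, contributing an additional $\Omega^2\sum_t\eta_t$ of the same order.

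It then remains to insert the two step-size choices. For $\eta_t = Ct^{-1/2}$ both the path-length term ($\sim D^2\sqrt{N}/(2C)$) and the gradient term ($\sim C\Omega^2\sqrt{N}$) are $O(\sqrt{N})$, so after dividing by $N$ the gap $|\widehat{Re}_N - \widehat{C}_N|$ is $O(N^{-1/2})$ and vanishes; for $\eta_t = C$ the telescoping term stays $O(1)$ (hence $O(1/N)$ after averaging) while the gradient term is $\tfrac{C\Omega^2}{2}N$, which after averaging leaves the constant $\tfrac{C\Omega^2}{2}$, precisely the ``within a constant'' of the statement. Finally, combining with Theorem 1, $\widehat{C}_N(\widetilde{\mathbf{V}}^N) \to C_N(\mathbf{V}^*)$ a.s., shows that $\widehat{Re}_N$ sits within a constant of the batch loss $C_N(\mathbf{V}^*)$, as claimed.

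The step I expect to be the main obstacle is justifying the diameter bound $D$ on $\|\widetilde{\mathbf{V}}^{t}-\widetilde{\mathbf{V}}^N\|_F$, i.e.\ that the iterates produced by the unprojected gradient recursion \eqref{eq:13} remain in a bounded region. The Lipschitz gradient bound of Lemma 1 alone only controls single-step displacements (Lemma 2), not the cumulative drift, so to close the telescoping cleanly I would either appeal to the strong convexity contributed by the $\lambda\|\mathbf{V}\|_F^2/2$ regularizer in \eqref{eq:11-2} to confine the minimizers (and hence the surrogate iterates) to a compact set, or explicitly constrain $\mathbf{V}$ to a bounded convex set and insert a Euclidean projection into \eqref{eq:13}, which is nonexpansive and therefore leaves every inequality above intact.
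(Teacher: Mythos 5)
Your skeleton is the same as the paper's appendix proof: expand $\|\widetilde{\mathbf{V}}^t - \widetilde{\mathbf{V}}^N\|_F^2$ through the update \eqref{eq:13}, invoke convexity and the gradient bound of Lemma 1 (so $\|\nabla_{\mathbf{V}} h_t\|_F \leq \Omega$), and telescope against the comparator $\widetilde{\mathbf{V}}^N$ with time-varying $\eta_t$. The genuine gap is exactly the step you flagged --- the diameter bound $D$ --- and neither of your proposed repairs is available for this algorithm. The update \eqref{eq:13} is a gradient step on $h_t$ alone: the $\lambda\|\mathbf{V}\|_F^2/2$ regularizer of \eqref{eq:11-2} never enters the recursion (it is dropped when the surrogate $\widetilde{h}_t$ is minimized), so there is no strong-convexity confinement to appeal to; and inserting a projection analyzes a different algorithm than the one the theorem is about. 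Worse, the iterates are provably \emph{not} bounded: by the paper's Lemma 3, the optimality of $\widetilde{\mathbf{a}}_t$ in \eqref{eq:11} gives $\langle \widetilde{\mathbf{V}}^{t-1}, \widetilde{\mathbf{V}}^t - \widetilde{\mathbf{V}}^{t-1} \rangle = \eta_t \gamma \|\widetilde{\mathbf{a}}_t\|_F^2 \geq 0$, so every step has a nonnegative component along the current iterate and $\|\widetilde{\mathbf{V}}^t\|_F^2$ is monotonically nondecreasing, with $\|\widetilde{\mathbf{V}}^t\|_F^2 - \|\widetilde{\mathbf{V}}^{t-1}\|_F^2 = 2\eta_t\gamma\|\widetilde{\mathbf{a}}_t\|_F^2 + \|\widetilde{\mathbf{V}}^t-\widetilde{\mathbf{V}}^{t-1}\|_F^2$. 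The paper's Lemma 5 turns this into the growth estimate $\|\widetilde{\mathbf{V}}^t\|_F^2 \leq \Omega^2 \sum_{s\leq t}\eta_s^2 + 2\gamma\Omega^2\sum_{s\leq t}\eta_s$, i.e. squared norm of order $\sqrt{t}$ for $\eta_t = Ct^{-1/2}$ and of order $t$ for $\eta_t = C$ (the simulations confirm this unbounded growth).

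This is not cosmetic: it changes the conclusion. Substituting the true growth for your $D^2$, the leading term $\|\widetilde{\mathbf{V}}^N\|_F^2/(2\eta_N)$ is of order $\gamma\Omega^2 N$ for diminishing steps (not $\sqrt{N}$), so after dividing by $N$ the gap does \emph{not} vanish; the paper obtains $\lim_{N\to\infty}|\widehat{Re}_N - \widehat{C}_N(\widetilde{\mathbf{V}}^N)| \leq \gamma\Omega^2/2$, a residual constant proportional to the regularization weight $\gamma$. Your claimed $O(N^{-1/2})$ decay for $\eta_t = Ct^{-1/2}$ is therefore stronger than what is provable by this route (and stronger than what the theorem asserts: ``within a constant'' for both step-size choices). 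Similarly, for constant step size your bound $C\Omega^2/2$ misses the $\gamma\Omega^2$ contribution coming from the norm growth; the paper's bound is $\gamma\Omega^2 + C\Omega^2$. The correct fix is to replace the bounded-diameter hypothesis by the explicit growth estimate (Lemma 3 plus Lemma 5) and carry it through the telescoped sum, accepting the nonvanishing constant that results.
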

A sketch of proof is given in Appendix A. The results basically show that $\lim_{N \to \infty} | \widehat{Re}_N - \widehat{C}_N (\widetilde{\mathbf{V}}^N)| \leq \frac{\gamma \Omega^2}{2}$ if $\eta_t=C t^{-1/2}$ and $\lim_{N \to \infty} | \widehat{Re}_N - \widehat{C}_N (\widetilde{\mathbf{V}}^N)| \leq \gamma \Omega^2 + C \Omega^2$ if $\eta_t=C$, $\Omega$ as a constant. From Theorem 1 \& Theorem 2, we recognize that both the average sequential function and Regret function converge to within a constant from the average batch optimum. 

However, it should be noted here that a better convergence result could be possible, probably by re-design the algorithms, which is one of our future tasks.

\section{Experimental Results}
\subsection{Simulated Binary-State System}
Firstly, we use simulated binary data to test the performance of our SLPCA algorithm in binary-state system. The generation of correlated Bernoulli sequences is illustrated in~\cite{lunn1998note}. In this work, we focus on the case where rank$(\mathbf{\Theta})=1$ since this usually demonstrates the best dimension reduction capability. It should be noted here that the extension to multiple Principal Components is straight-forward following the iterative updating rules in~\cite{collins2001generalization}. 

\begin{figure}[ht]
\begin{subfigure}{0.5\textwidth}
	\centering
	\includegraphics[trim = 0mm 0mm 0mm 5mm, width=1\columnwidth]{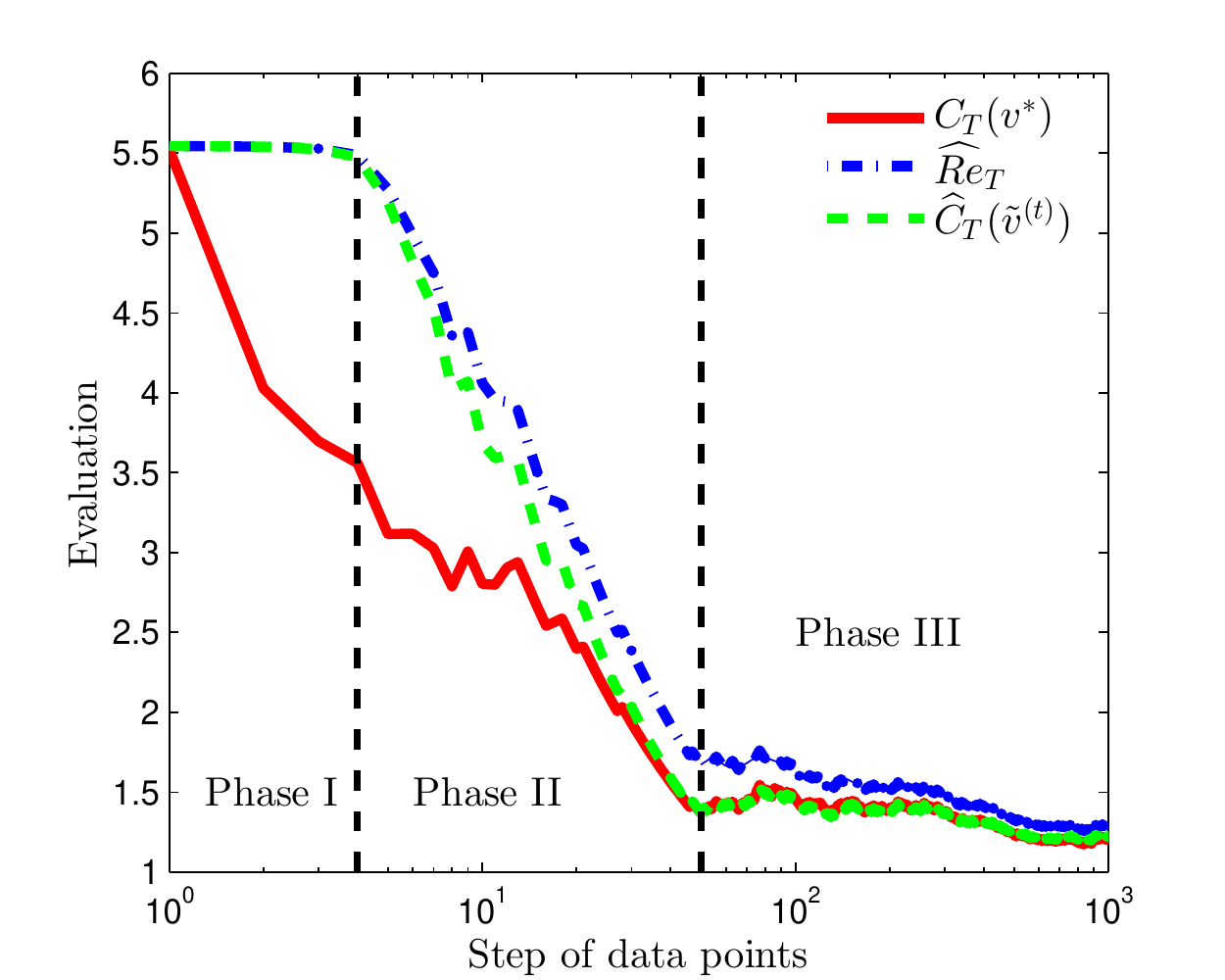}
\end{subfigure}
\begin{subfigure}{0.5\textwidth}
	\centering
	\includegraphics[trim = 0mm 0mm 0mm 5mm, width=1\columnwidth]{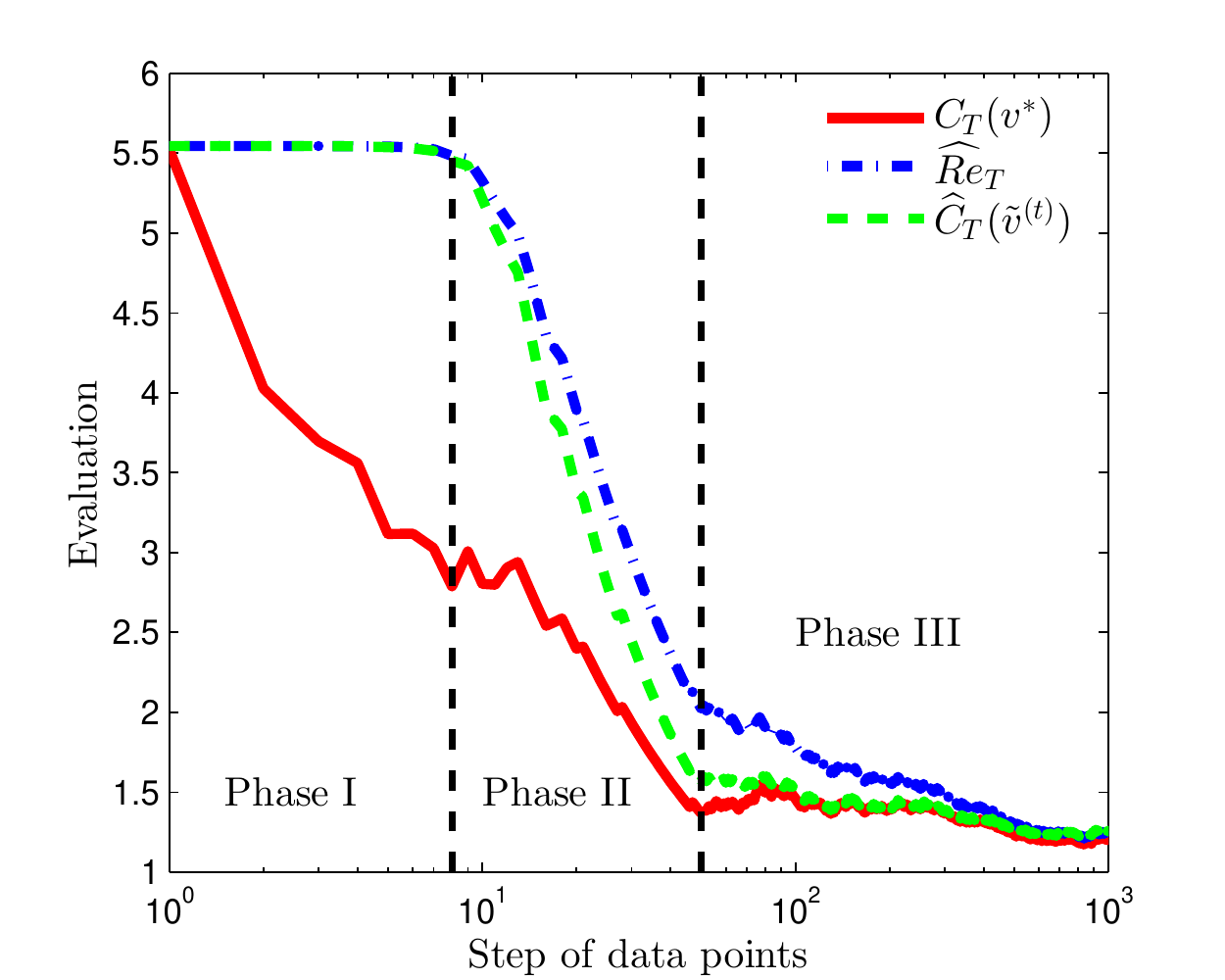}
\end{subfigure}
\caption{The three functions $C_t (\mathbf{V}^*)$, $\widehat{C}_t (\widetilde{\mathbf{V}}^t)$ and $\widehat{Re}_t$ as function of $t$. Top: $\eta_t = C t^{-1/2}$, with $C = 0.2$, $\gamma = 0.1$. Bottom: $\eta_t = C$, with $C = 0.05$, $\gamma = 0.1$.}
\label{EvalSim}
\end{figure}

\begin{figure*}[ht]
\begin{subfigure}{1\textwidth}
	\centering
	\includegraphics[trim = 0mm 0mm 0mm 5mm, width=1\textwidth, height=5cm]{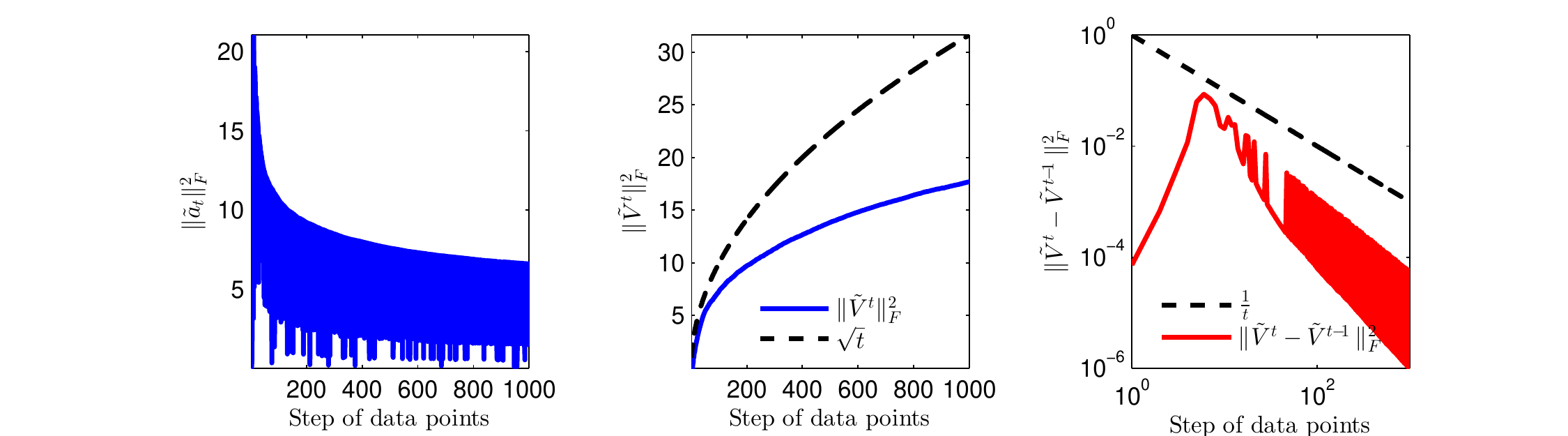}
\end{subfigure}
\begin{subfigure}{1\textwidth}
	\centering
	\includegraphics[trim = 0mm 0mm 0mm 2.5mm, width=1\textwidth, height=5cm]{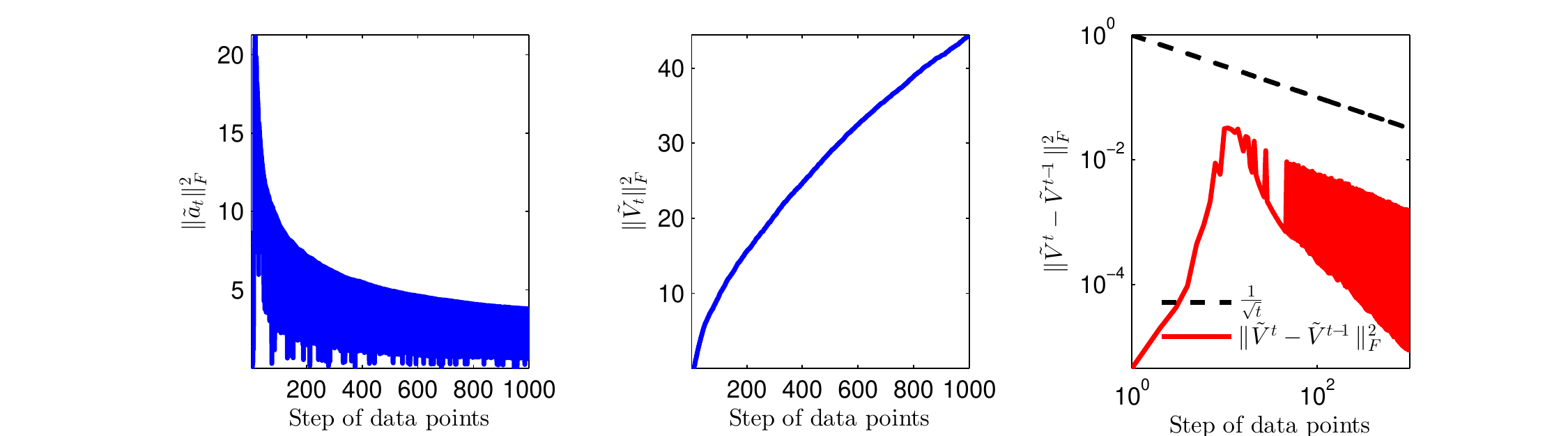}
\end{subfigure}
\caption{The convergence property of $\widetilde{\mathbf{a}}_t$, $\widetilde{\mathbf{V}}^t$ and $\|\widetilde{\mathbf{V}}^t - \widetilde{\mathbf{V}}^{t-1}\|_{F}$. Top: $\eta_t=Ct^{-1/2}$, with $C = 0.2$, $\gamma = 0.1$. Bottom: $\eta_t=C$, with $C = 0.05$, $\gamma = 0.1$.}
\label{ConvSim}
\end{figure*}

We tried the above on data with $P = 8$ dimension and length of $N = 1000$ data points. We initialize $\widetilde{\mathbf{V}}^0$ such that its norm is close but not equal to zero, for computation and convergence purposes. Fig~\ref{EvalSim} shows the three functions defined in \eqref{eq:15}; whereas Fig~\ref{ConvSim} shows the key parameters in the sequential steps. There are some interesting findings. 

Firstly, though both $\widehat{C}_N (\widetilde{\mathbf{V}}^N)$ and $\widehat{Re}_N$ converges at least within a constant to $C_N (\mathbf{V}^*)$, the stochastic learning can be clearly divided into three Phases, as shown in Fig~\ref{EvalSim}. Phase I stands for the period when the norm of $\widetilde{\mathbf{V}}^0$ is close to zero right after the initialization, when $h_t(\mathbf{a}_t, \mathbf{V})$ approaches $P\log2$ as in Equation \eqref{eq:14}. Phase II characterizes the decay of error versus $N$, whereas Phase III stands for when the error converges to within a constant independent of $N$. 

Secondly, $\|\widetilde{\mathbf{V}}^t\|_{F}^2$ increases versus $t$, which means that $\|\widetilde{\mathbf{V}}^t\|_{F}^2$ behaves differently from the coefficient in sequential learning of linear model~\cite{mairal2010online}~\cite{mardani2013rank}. Matrix factorization places no constraints for $\widetilde{\mathbf{V}}^t$, hence cannot guarantee the bound of $\widetilde{\mathbf{V}}^t$. From another perspective, $\widetilde{\mathbf{a}}_t$ is bounded since Equation \eqref{eq:11} has consistent regularization, while $\widetilde{\mathbf{V}}_t$ not since there is a summation of loss functions in Equation \eqref{eq:11-2}. It should be noted that, in Fig~\ref{ConvSim}, $\widetilde{\mathbf{a}}_t$ decreases versus $t$, which could result from \eqref{eq:11} and is an interesting topic in the future. 

Thirdly, due to the unbounded $\widetilde{\mathbf{V}}^t$, the term $\|\widetilde{\mathbf{V}}^t - \widetilde{\mathbf{V}}^{t-1}\|_{F}$ is not $\propto t^{-1}$ as in~\cite{mairal2010online} and~\cite{mardani2013rank}. It should be noted that the theoretical bound for $\|\widetilde{\mathbf{V}}^t - \widetilde{\mathbf{V}}^{t-1}\|_{F}$ under constant step size could be as low as $t^{-1/2}$, which could be a result of the convergence behavior of $\widetilde{\mathbf{a}}_t$ under constant step size. 

Last but not least, it is important to mention that the bounds obtained in Theorem 2 assume $N$ large enough. However, in many cases the decay of $N$ is not that fast. Therefore, the effect of $N$ cannot be completely ignored in the analysis.

\subsection{Building End-Use Energy Modeling}
Here, we introduce an application of SLPCA in Building Energy End-Use Modeling. Building End-Uses corresponds to the energy sectors that are occupant-driven. This subject has attracted significant interest in recent years because building energy shows strong dependence on end-user behavior, e.g. plug-in loads, user-controlled lighting, user-adjusted HVAC, etc.~\cite{swan2009modeling}~\cite{kang2014modeling}. 

Energy end-use modeling has been attempted from either a top-down or a bottom-up approach. In this work, since we are more interested in modeling occupant behavior, we adopt the bottom-up approach. This approach is usually based on stochastic simulations of the energy usage pattern for each individual appliance. Dimensional reduction can help to generate one or more \emph{Principal Appliances}, and can more efficiently characterize the whole space energy consumption.

\begin{figure}[ht]
\centering
\includegraphics[trim = 0mm 0mm 0mm 5mm, width=1\columnwidth]{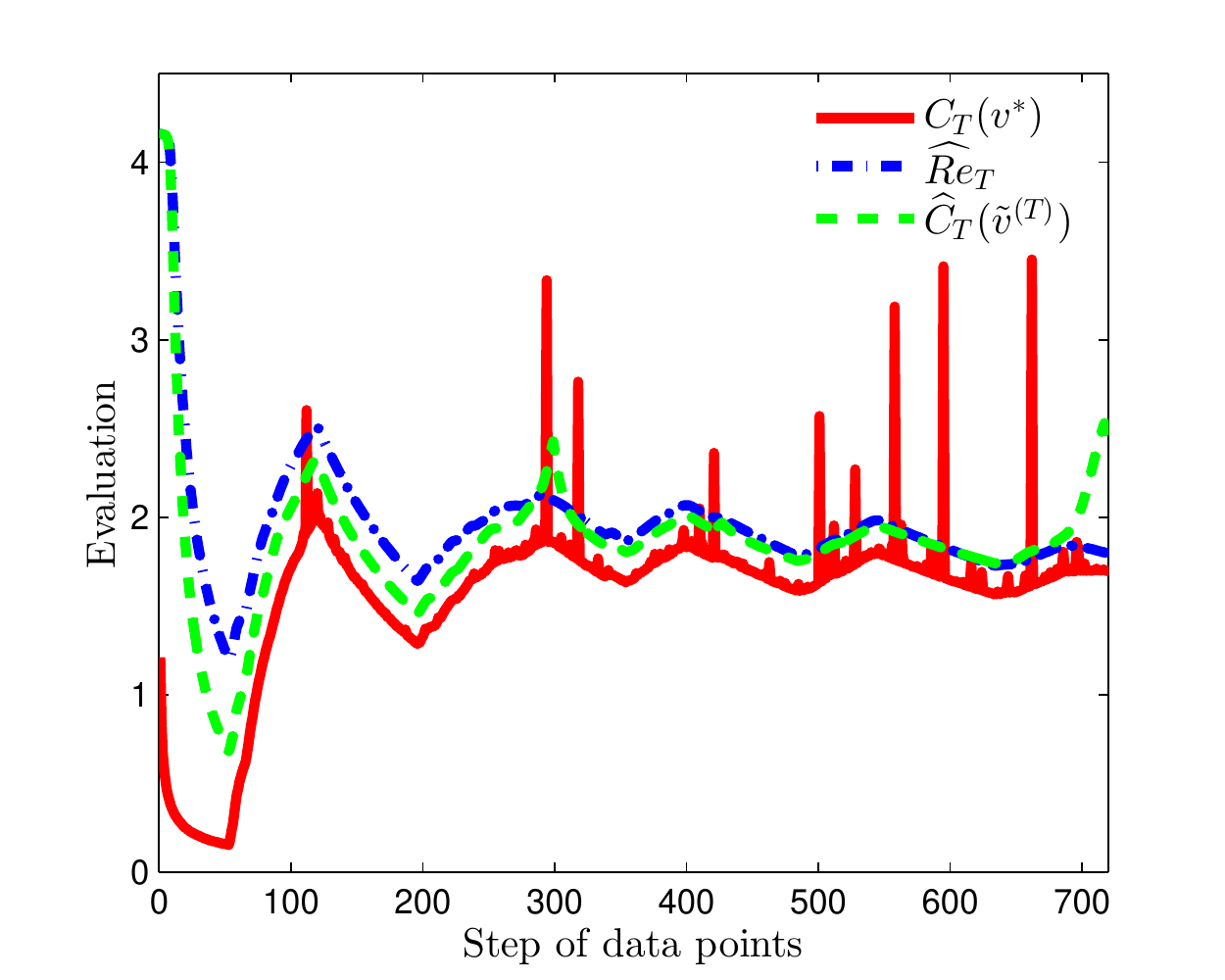}
\caption{The three functions $C_t (\mathbf{V}^*)$, $\widehat{C}_t (\widetilde{\mathbf{V}}^t)$ and $\widehat{Re}_t$ as function of $t$ for energy end-use simulation with constant step size $\eta_t = C$ as $C = 0.05$, $\gamma = 0.1$.}
\label{EvalEU}
\end{figure}

Here, we want to study the modeling of all the computer monitors in a small, shared work space. We collect the data of 6 monitors in 10 minutes interval, and use BLPCA and SLPCA to obtain the \emph{Principal Monitor} profile of the building. Considering that the pattern could be non-stationary, we choose the constant step size that is short enough to track the changes as they appear\footnote{one could presumably also leverage the likely periodic behavior of the data by appropriate aggregation}. We also only consider the first \emph{Principal Monitor} to achieve the best dimensional reduction. The convergence of the algorithm is shown in Fig~\ref{EvalEU}. We observe a good convergence for both $\widehat{C}_N (\widetilde{\mathbf{V}}^N)$ and $\widehat{Re}_N$. Periodic fluctuation is observed, due to the periodic transition between day and night energy consumption, which results in periodical changing of the data model. Moreover, the online algorithm demonstrate less fluctuations because they adaptively update the model of the data. 

The BLPCA, SLPCA and Regret are used to reconstruct the original data, as illustrated in Section II-C, when we discussed the reconstruction of $\mathbf{X}$ by $g(\mathbf{\Theta})$. The results are compared with the original data in Fig~\ref{ConvEU} (sum of states of all appliances, 1 as ON and 0 as OFF). Interestingly, Regret gives better approximation to BLPCA since it uses locally best pairs of $\widetilde{\mathbf{a}}_t$ and $\widetilde{\mathbf{V}}^t$, so that can better catch the periodic pattern of the original data. Whereas SLPCA uses the $\widetilde{\mathbf{V}}^T$, which could probably give unpromising result if data is non-stationary. 

\begin{figure}[ht]
\centering
\includegraphics[trim = 10mm 10mm 10mm 10mm, width=1\columnwidth]{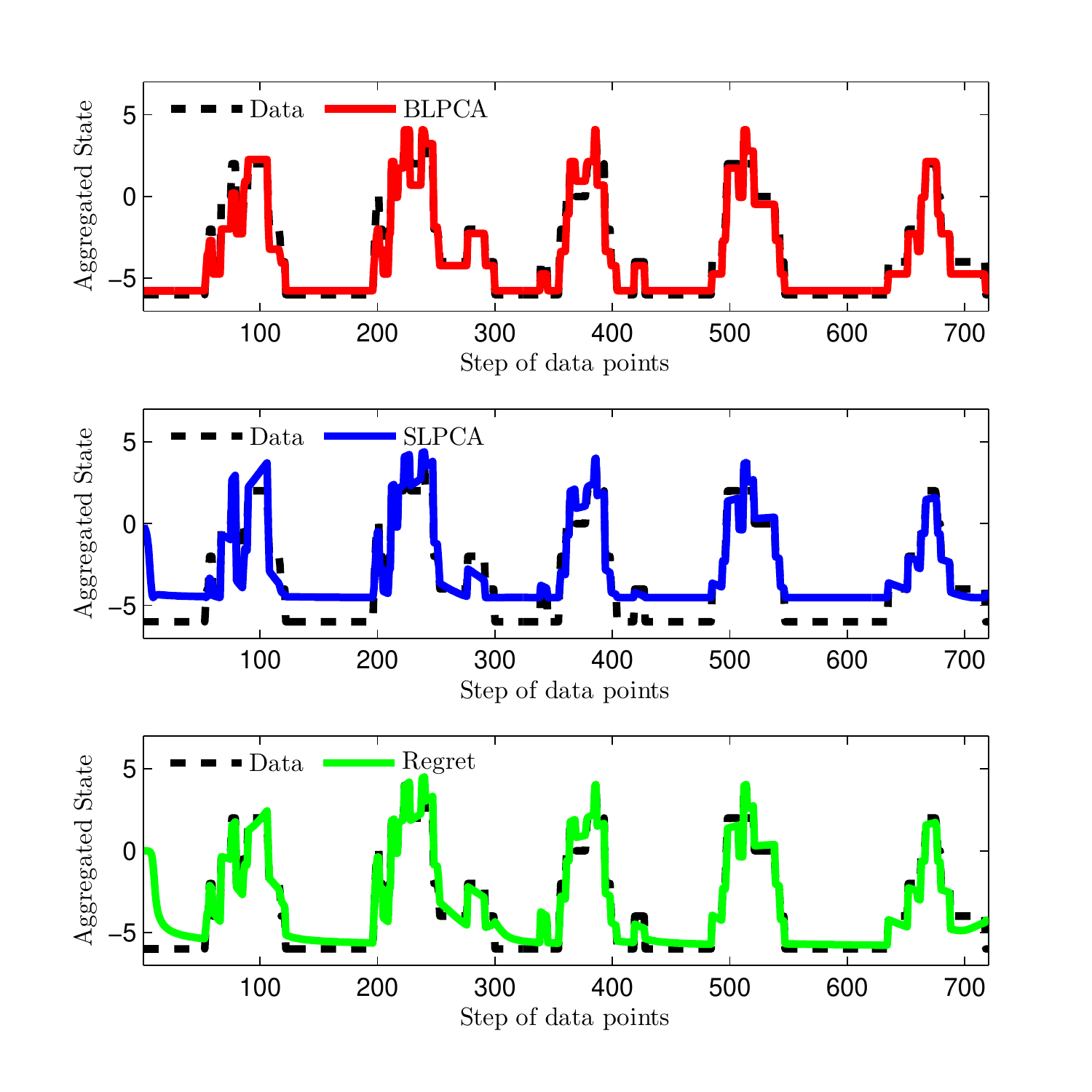}
\caption{Reconstruction of the aggregated state (sum of states of 6 monitors) under the three functions $C_t (\mathbf{V}^*)$, $\widehat{C}_t (\widetilde{\mathbf{V}}^t)$ and $\widehat{Re}_t$ as function of $t$.}
\label{ConvEU}
\end{figure}

\section{Conclusion}
Sequential or online dimension reduction addresses more and more attentions due to the explosion of streaming data based application and the requirement of adaptive statistical modeling in many emerging fields. In this work, we extend the theory of \emph{e}PCA or LPCA to sequential version based on online convex optimization theory, which can maintain the capability to model large families of distributions, at the same time achieve the computation and storage efficiency. In our work, we define two functions to evaluate the SLPCA algorithm, the average sequential target function $\widehat{C}_N (\widetilde{\mathbf{V}}^N)$ and the Regret function $\widehat{Re}_N$, and show that both of them converge at least within a constant to BLPCA results. We also demonstrate an application of this algorithm in building energy end-use modeling. 

\appendix
\begin{lemma}
For $t=1,\cdots, N$, if $\Omega$ is the upper bound of $\|\mathbf{a}\|_{opt}^2$ as in Lemma 2, $\|\widetilde{\mathbf{V}}^t\|_{F}^2 \leq \Omega^2 \sum_{s=1}^t \eta_s^2 + 2\gamma \Omega^2 \sum_{s=1}^t \eta_s$.
\end{lemma}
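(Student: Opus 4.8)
The plan is to unroll the recursion \eqref{eq:13} for $\widetilde{\mathbf{V}}^t$ and control each resulting term with the earlier lemmas. First I would expand the squared Frobenius norm of a single update step:
\[
\|\widetilde{\mathbf{V}}^t\|_{F}^2 = \|\widetilde{\mathbf{V}}^{t-1} - \eta_t \nabla_{\mathbf{V}} h_t\|_{F}^2 = \|\widetilde{\mathbf{V}}^{t-1}\|_{F}^2 - 2\eta_t \langle \widetilde{\mathbf{V}}^{t-1}, \nabla_{\mathbf{V}} h_t \rangle + \eta_t^2 \|\nabla_{\mathbf{V}} h_t\|_{F}^2 .
\]
This reduces the problem to bounding, at step $t$, the quadratic term and the cross term, after which a telescoping sum finishes the argument.

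For the quadratic term, Lemma 1 gives $\|\nabla_{\mathbf{V}} h_t\|_{F} \le \|\widetilde{\mathbf{a}}_t\|_{F}$, and since $\widetilde{\mathbf{a}}_t$ is bounded by $\Omega$ (Lemma 2), we obtain $\eta_t^2\|\nabla_{\mathbf{V}} h_t\|_{F}^2 \le \eta_t^2 \Omega^2$. The decisive step is the cross term, whose sign is not obvious a priori. Here I would invoke Lemma 3: the optimality of $\widetilde{\mathbf{a}}_t$ in \eqref{eq:11} gives $\nabla_{\mathbf{a}} h_t = -\gamma \widetilde{\mathbf{a}}_t$, and the identity $\langle \mathbf{a}, \nabla_{\mathbf{a}} h_t \rangle = \langle \mathbf{V}, \nabla_{\mathbf{V}} h_t \rangle$ evaluated at $(\widetilde{\mathbf{a}}_t, \widetilde{\mathbf{V}}^{t-1})$ yields $\langle \widetilde{\mathbf{V}}^{t-1}, \nabla_{\mathbf{V}} h_t \rangle = -\gamma \|\widetilde{\mathbf{a}}_t\|_{F}^2$. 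Hence the cross term is $-2\eta_t\langle \widetilde{\mathbf{V}}^{t-1}, \nabla_{\mathbf{V}} h_t \rangle = 2\eta_t\gamma \|\widetilde{\mathbf{a}}_t\|_{F}^2 \le 2\eta_t\gamma\Omega^2$. Substituting both bounds produces the one-step inequality
\[
\|\widetilde{\mathbf{V}}^t\|_{F}^2 \le \|\widetilde{\mathbf{V}}^{t-1}\|_{F}^2 + \eta_t^2 \Omega^2 + 2\eta_t\gamma\Omega^2 .
\]

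Finally I would telescope this from $s=1$ to $t$. Taking the initialization $\widetilde{\mathbf{V}}^0 = 0$ (the algorithm sets $\widetilde{\mathbf{V}}^0 \approx 0$, and any nonzero starting norm merely contributes an additive constant), the accumulated right-hand side is exactly $\Omega^2\sum_{s=1}^t \eta_s^2 + 2\gamma\Omega^2\sum_{s=1}^t\eta_s$, which is the claimed bound.

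I expect the only genuinely delicate point to be the sign of the cross term. The update is not a contraction: the inner product $\langle \widetilde{\mathbf{V}}^{t-1}, -\eta_t\nabla_{\mathbf{V}} h_t\rangle$ is nonnegative because of the regularization coupling captured in Lemma 3, so the recursion accumulates rather than dissipates. This is precisely the mechanism behind the later observation that $\|\widetilde{\mathbf{V}}^t\|_{F}$ grows with $t$ instead of remaining bounded, and this lemma makes that growth quantitative. Everything else is routine expansion of the norm and summation of the per-step bounds.
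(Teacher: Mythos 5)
Your proof is correct and follows essentially the same route as the paper's: both expand the one-step update $\|\widetilde{\mathbf{V}}^t\|_{F}^2 = \|\widetilde{\mathbf{V}}^{t-1}\|_{F}^2 + \|\widetilde{\mathbf{V}}^t - \widetilde{\mathbf{V}}^{t-1}\|_{F}^2 + 2\langle \widetilde{\mathbf{V}}^{t-1}, \widetilde{\mathbf{V}}^t - \widetilde{\mathbf{V}}^{t-1}\rangle$, identify the cross term as $2\eta_t\gamma\|\widetilde{\mathbf{a}}_t\|_{F}^2$ via Lemma 3, bound the remaining terms with Lemmas 1 and 2, and telescope from $\widetilde{\mathbf{V}}^0 \approx 0$. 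The only cosmetic difference is that you bound each step before summing while the paper sums the exact identity first and bounds afterward.
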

\begin{proof}
We start from the relationship:
\begin{align}
\|\widetilde{\mathbf{V}}^t - \widetilde{\mathbf{V}}^{t-1}\|_{F}^2 & = \|\widetilde{\mathbf{V}}^t\|_{F}^2 - \|\widetilde{\mathbf{V}}^{t-1}\|_{F}^2 - 2 \langle \widetilde{\mathbf{V}}^t - \widetilde{\mathbf{V}}^{t-1}, \widetilde{\mathbf{V}}^{t-1} \rangle \nonumber \\
& = \|\widetilde{\mathbf{V}}^t\|_{F}^2 - \|\widetilde{\mathbf{V}}^{t-1}\|_{F}^2 - 2\eta_t \gamma \|\widetilde{\mathbf{a}}^t\|_{F}^2 \nonumber
\end{align}
We sum over the LHS and RHS and get:
\begin{equation}
\sum_{s=1}^t \|\widetilde{\mathbf{V}}^s - \widetilde{\mathbf{V}}^{s-1}\|_{F}^2 + 2 \gamma \sum_{s=1}^t \eta_s \|\widetilde{\mathbf{a}}_s\|_{F}^2 = \|\widetilde{\mathbf{V}}^t\|_{F}^2 - \|\widetilde{\mathbf{V}}^0\|_{F}^2 \nonumber
\end{equation}
For simplicity, assume $\|\widetilde{\mathbf{V}}^0\|_{F}^2 \approx 0$, we prove the lemma. 
\end{proof}

Now turn to proof of Theorem 2. Based on \eqref{eq:13} we have:
\begin{align}
\|\widetilde{\mathbf{V}}^t - \widetilde{\mathbf{V}}^N\|_{F}^2 & = \|\widetilde{\mathbf{V}}^{t-1} - \widetilde{\mathbf{V}}^N\|_{F}^2 + \eta_t^2 \|\nabla_{\mathbf{V}} h_t\|_{F}^2 \nonumber \\
& - 2\eta_t \langle \nabla_{\mathbf{V}} h_t, \widetilde{\mathbf{V}}^{t-1} - \widetilde{\mathbf{V}}^N \rangle \nonumber
\end{align}
From Lemma 1, Lemma 5, and $\|\nabla_{\mathbf{V}} h_t\|_{F}^2 \leq \Omega^2$, thus:
\begin{align}
& N \lbrace \widehat{Re}_N - \widehat{C}_N (\widetilde{\mathbf{V}}^N) \rbrace \leq \sum_{t=1}^N \langle \nabla_{\mathbf{V}} h_t, \widetilde{\mathbf{V}}^{t-1} - \widetilde{\mathbf{V}}^N \rangle \nonumber \\
& \leq \frac{\|\widetilde{\mathbf{V}}^N\|_{F}^2}{2\eta_0} + \sum_{t=1}^N \left( \frac{1}{2\eta_t} - \frac{1}{2\eta_{t-1}} \right) \|\widetilde{\mathbf{V}}^N - \widetilde{\mathbf{V}}^{t-1}\|_{F}^2 + \frac{\Omega^2}{2} \eta_t \nonumber \\
& \leq \frac{\|\widetilde{\mathbf{V}}^N\|_{F}^2}{2\eta_0} + \sum_{t=1}^N \left( \frac{1}{2\eta_t} - \frac{1}{2\eta_{t-1}} \right) \|\widetilde{\mathbf{V}}^N\|_{F}^2 + \frac{\Omega^2}{2} \eta_t \nonumber
\end{align}
\begin{itemize}
	\item diminishing step size $\eta_t = Ct^{-1/2}$. From Lemma 5, we have:
	\begin{align}
	| \widehat{Re}_N - \widehat{C}_N (\widetilde{\mathbf{V}}^N)| & \leq \frac{\Omega^2 C}{2} \frac{\log N}{N} + \frac{\Omega^2 C}{4} \frac{\log N}{\sqrt{N}} \nonumber \\
	& + \frac{\Omega^2 (2\gamma + C)}{2 \sqrt{N}} + \frac{\gamma \Omega^2}{2} \nonumber
	\end{align}
	Then $\lim_{N \to \infty} | \widehat{Re}_N - \widehat{C}_N (\widetilde{\mathbf{V}}^N)| \leq \frac{\gamma \Omega^2}{2}$. But with reasonable $N$, the term $\frac{\Omega^2 C \log N}{\sqrt{N}}$ will also be significant. Usually, small $C$ and $\gamma$ can force a lower error bound. However, small $\gamma$ can result in more steps in optimizing for $\widetilde{\mathbf{a}}_t$, whereas small $C$ would make the step size too small, which may not be a good choice if we want a fast decaying of the error bound.
	\item constant step size $\eta_t = C$:
    For constant step, we have:
    \begin{equation}
    | \widehat{Re}_N - \widehat{C}_N (\widetilde{\mathbf{V}}^N)| \leq \gamma \Omega^2 + \Omega^2 C \nonumber
    \end{equation}
    Similarly, we prefer small small $C$ and $\gamma$. The challenge of using small $C$ and $\gamma$ have already been discussed. 
\end{itemize}

\section*{Acknowledgment}
This research is funded by the Republic of Singapore’s National Research Foundation through a grant to the Berkeley Education Alliance for Research in Singapore (BEARS) for the Singapore-Berkeley Building Efficiency and Sustainability in the Tropics (SinBerBEST) Program. BEARS has been established by the University of California, Berkeley as a center for intellectual excellence in research and education in Singapore.



%

\bibliographystyle{IEEEtran}
\bibliography{IEEEabrv}



\end{document}